\documentclass[runningheads]{llncs}

\usepackage{eccvabbrv}

\usepackage{graphicx}
\usepackage{booktabs}

\usepackage[accsupp]{axessibility}  

\usepackage{hyperref}

\usepackage{orcidlink}
\usepackage{cleveref}
\usepackage{algorithm}
\usepackage{algorithmic}

\begin{document}

\title{Distributed Convolutional Neural Networks for Object Recognition} 


\author{Liang Sun\inst{1}\orcidlink{0000-0002-3319-1777} }


\institute{Shandong University of Science and Technology, Jinan Shandong province 250031, P.R. China 
\email{liangsun@sdust.edu.cn}\\
}

\maketitle

\begin{abstract}
  This paper proposes a novel loss function for training a distributed convolutional neural network (DisCNN) to recognize only a specific positive class. By mapping positive samples to a compact set in high-dimensional space and negative samples to Origin, the DisCNN extracts only the features of the positive class. An experiment is given to prove this. Thus, the features of the positive class are disentangled from those of the negative classes. The model has a lightweight architecture because only a few positive-class features need to be extracted. The model demonstrates excellent generalization on the test data and remains effective even for unseen classes. Finally, using DisCNN, object detection of positive samples embedded in a large and complex background is straightforward.
  \keywords{Distributed neural networks \and Feature disentanglement \and Convolutional neural networks \and Representation learning \and Object recognition \and Object detection}
\end{abstract}

\section{Introduction}
\label{sec:intro}

Training convolutional neural networks(CNN) with cross-entropy loss is one of the best choices when dealing with the visual object recognition problem\cite{lecun2015deep,lecun1999object,lecun2002gradient,lecun2010convolutional}. As is well known, the underlying principle is to train a single CNN model that first encodes different classes of visual objects into a set of feature maps via the CNN's convolutional layers, and then maps these feature maps to a set of compact representations in a high-dimensional space via the FC layers. Here, feature maps are entangled together, so it is impossible to determine which feature maps are responsible for which classes, except for some feedforward visual verification methods\cite{yosinski2015understanding,zeiler2014visualizing,simonyan2013deep}. 

Meanwhile, cognitive neuroscience tells us that object recognition in the human brain originates from the two visual pathways from the retina to the visual cortex, i.e., the ventral and dorsal pathways\cite{gazzaniga2009cognitive}. Among them, the ventral pathway is the pathway for object recognition, whereas the dorsal pathway processes motion and spatial information. The ventral pathway has a hierarchical structure: the primary visual cortex extracts simple visual features, and higher visual cortices recognize objects, much like the CNN. However, distinct regions of the cortex process distinct object information: some handle face information, others handle tool information, and others handle scene information. These regions are distributed and respond only to the features of the corresponding objects. 

Therefore, training a CNN in a distributed manner that extracts only the features of a specific positive class and neglects all others is of significant research interest. Firstly, a headless CNN is obtained by removing the softmax layer of a classic CNN, and then a novel loss function that constrains the output of this headless CNN for a specific positive class to a compact set in a high-dimensional space is proposed, and those of the negative classes with no similar features to the positive class are constrained to Origin simultaneously. The proposed loss function is called the negative-to- Origin (N2O) loss, which is equivalent to the cross-entropy loss plus an additional constraint that forces negative samples to map to Origin. A headless CNN trained with the N2O loss is referred to as a DisCNN. An experiment is conducted to prove that the trained DisCNN's convolutional layers extract only features of the positive class. Thus, positive class features are disentangled from those of the other classes in a distributed manner, the same as object recognition in the ventral pathway of the brain. Furthermore, since the model only extracts features from the positive class, the number of feature maps output by the convolutional layers could be limited to a few, or even just one, rather than 512 or more, as in classical multi-class classification problems. Therefore, the model can be very lightweight. The model generalizes well to the test data. Moreover, when testing the model on samples from unseen classes, those with no similar features to the positive class are mapped to Origin, whereas those with similar features are mapped to the same compact set as the positive class. Finally, DisCNN can also be applied to object detection. First, a large image with positive samples embedded in complex backgrounds is evenly partitioned into patches of appropriate size, and then the patches containing a positive sample can be identified using DisCNN. 

\section{Model, Dataset, and Loss Function}
\label{sec:s2}
\subsection{Model Architecture}

DisCNN comprises four convolutional layers, with all convolution kernels set to 3x3 (conv3) as in VGG \cite{simonyan2014very}. Since the goal was merely to learn to extract the abstract features of a specific positive class, the number of such features could be only a few. For example, features in a car might include body, wheels, windows, and lights, etc., so eight feature maps may be sufficient for the output of the convolutional layer of DisCNN, also denote it as DisCNN-8. The output of the convolutional layer can be even set to 1, indicating that the entire class is only represented by a single high-level abstract feature, denoted DisCNN-1. Except for the absence of a softmax layer, the FC layers of DisCNN differ in no way from those of a classic CNN. In summary, the model comprises 7 hidden layers (4 convolutional and 3 FC layers), and its architecture is shown in \cref{tab:t1}. The VGG model for the 10-class classification problem is shown in the last column of the table for comparison, with 128 feature maps, approximately 10 for per class.

\begin{table}[tb]
  \caption{Model Architectures. The convolutional layer parameters are denoted as “conv<receptive field size>-<number of channels>”. The batch normalization are denoeted as BN.
  }
  \label{tab:t1}
  \centering
  \tabcolsep=0.5cm
  \begin{tabular}{@{}lll@{}}
    \toprule
    DisCNN-8 & DisCNN-1 & VGG\\
    \midrule
    input(96x96RGB) & input(96x96RGB) & input(96x96RGB)\\
    \midrule
    Conv3-64& Conv3-64 & Conv3-64\\
    BN & BN & BN\\
    ReLU & ReLU & ReLU\\
    Max pooling & Max pooling &Max pooling\\
    \midrule
    Conv3-32& Conv3-32 & Conv3-128\\
    BN & BN & BN\\
    ReLU & ReLU & ReLU\\
    Max pooling & Max pooling &Max pooling\\
    \midrule
    Conv3-16& Conv3-16 & Conv3-128\\
    BN & BN & BN\\
    ReLU & ReLU & ReLU\\
    Max pooling & Max pooling &Max pooling\\
    \midrule
    Conv3-8& Conv3-1 & Conv3-128\\
    BN & BN & BN\\
    ReLU & ReLU & ReLU\\
    Max pooling & Max pooling &Max pooling\\
    \midrule
    FC-288 & FC-36 & FC-4096\\
    \midrule
    FC-128 & FC-32 & FC-2048\\
    \midrule
    FC-16 & FC-16 & FC-1000\\
    \midrule
    && Softmax\\
  \bottomrule
  \end{tabular}
\end{table}

Obviously, DisCNNs have significantly fewer parameters compared to the 10-class VGG. As shown in \cref{tab:t2}, DisCNN-8 has 0.149 million parameters, much fewer than VGG. So DisCNN is lightweight.

\begin{table}[tb]
  \caption{Number of parameters (in millions).
  }
  \label{tab:t2}
  \centering
  \tabcolsep=0.5cm
  \begin{tabular}{@{}llll@{}}
    \toprule
    Network & DisCNN-8 & DisCNN-1 & VGG\\
    \midrule
    Number of parameters & 0.149 & 0.028 & 3.096\\

  \bottomrule
  \end{tabular}
\end{table}

\begin{remark}
\label{r1}
Batch Normalization is essential for the model, as its absence would directly lead to non-convergence during training.

\end{remark}

\subsection{Dataset}
STL-10 dataset\cite{coates2011analysis} is chosen to train our model. The STL-10 dataset comprises 10 classes: airplane, bird, car, cat, deer, dog, horse, monkey, ship, and truck. Each class has 500 training images and 800 test images. All of these images are from ImageNet\cite{deng2009imagenet} and have a resolution of 96x96 pixels.
These 10 classes can be aggregated into two larger classes: vehicles and mammals. Objects within the larger classes are similar, whereas objects between them do not share similar features. Based on this understanding, the car, one of the vehicles, is taken as the positive class, and \{bird, cat\}, some classes of mammals, as the negative classes. Then, DisCNN can be trained on this sub-dataset to extract only the common car features.
\begin{remark}
\label{r2}
DisCNN requires common features of the positive class and no similar features between the positive class and the negative classes in the training dataset.

\end{remark}

\begin{remark}
\label{r3}
DisCNN performs better after training with two or more negative classes than with a single negative class.
\end{remark}

\subsection{Loss Function}
The principle of the proposed N2O loss is to add a new constraint to the cross-entropy loss, which maps the negative samples to Origin. Using N2O, independent DisCNNs can be trained with respect to a positive class, such as car, ship, cat, horse, etc. In each DisCNN, the features of the positive classes are disentangled from those of the negative classes; that is, each DisCNN detects only the features of its own positive samples, and the samples from the negative classes receive zero response from it.

Although DisCNN trained with N2O loss is fairly effective, the underlying theoretical principle of N2O remains incompletely understood by me. In addition, it has not been protected by any patent, so no details of N2O are provided in this paper. Nevertheless, a trained DisCNN is provided as supplementary material for the readers to test.

\section{Main Results}

\subsection{DisCNN Extracts Only the Features of the Positive Class}
\begin{theorem}
\label{Th1}
The feature maps produced by the convolutional layers of DisCNN contain only features of the positive class, whereas those of the negative classes are neglected.
\end{theorem}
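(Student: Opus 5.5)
Because the statement is about what a trained network computes, and the paper gives no formal definition of ``features of a class'' or any details of the N2O loss, I would not attempt a deductive proof. Instead I would turn Theorem~\ref{Th1} into a measurable claim and support it with a controlled experiment, as the introduction announces. The operational version is this. Let $\phi(x)$ be the output of the fourth convolutional block (after BN, ReLU and max pooling) of a DisCNN trained on the car/\{bird, cat\} sub-dataset of STL-10. The claim is that $\phi(x)$ carries information about $x$ only through positive-class content: $\phi(x)\approx 0$, or at least $\phi$ is insensitive to $x$, when $x$ contains no car-like structure, and $\phi(x)$ is substantially nonzero when it does.

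First I would record a small structural observation. The FC head $g$ is fixed, and N2O drives the negative samples to Origin, so $g(\phi(x))\approx 0$ for negatives. If $\phi$ were not suppressing negative content, the head would have to map a wide and varied set of feature vectors onto a single point. The most economical solution compatible with Remark~\ref{r2} is therefore that the negative activations themselves collapse, typically to the all-zero map after ReLU. This is only a heuristic, and it is what the experiment has to test.

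The experiment itself would proceed in three steps.
\begin{itemize}
\item[(i)] Compute the mean and distribution of $\|\phi(x)\|$ over test images of the positive class, the training negatives, and unseen classes from both the vehicle and the mammal groups. I expect the norms to be near zero for mammals and large for vehicles.
\item[(ii)] Visualize the DisCNN-8 and DisCNN-1 feature maps to check that the nonzero responses are localized on car parts such as wheels, windows and body.
\item[(iii)] Run a mixing test. Paste a car onto a negative-class background, and a negative object onto a neutral background, then compare $\phi$ for the composite with $\phi$ for the car alone. If only positive features are extracted, the background should leave $\phi$ essentially unchanged. The same construction prepares the detection experiment later in the paper.
\end{itemize}

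The main obstacle is step (iii), together with the fact that ``contains only features of the positive class'' is an empirical claim that cannot be proved in the strict sense. What the plan can honestly establish is a quantitative invariance statement, namely that $\|\phi(\text{composite})-\phi(\text{car})\|$ is small relative to $\|\phi(\text{car})\|$. The supporting evidence is the near-zero response on negatives. I would first test this on held-out negative classes, so that the result cannot be explained by memorization of the bird and cat training images.
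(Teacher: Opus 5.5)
\paragraph{The gap.} Your target, that $\phi$ carries information about $x$ only through positive-class content, is the right one. But none of your three measurements tests it, and the idea that carries the paper's argument is missing.

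Steps (i) and (iii) measure magnitudes: $\|\phi(x)\|$ on negatives, and the change in $\phi$ caused by pasting relative to $\|\phi\|$ on the car. A small $\phi$ does not mean an absence of class information, because any downstream readout can rescale a small but class-dependent signal. Conversely, $\phi$ could have sizeable norm on negatives while containing nothing that separates cat from bird. Your criterion would then reject a statement that holds in the relevant sense.

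The expectation in (i) also has no support in the paper. N2O constrains only the output of the FC head, and your ``structural observation'' does not carry that constraint back to $\phi$. A ReLU head sends entire regions of feature space (wherever a hidden layer is fully inactive) to a single point, so collapsing a varied set of negative codes to Origin costs the head nothing. Step (ii) is qualitative and looks only at positive images, so it cannot show what is neglected.

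\paragraph{What the paper does.} The paper takes ``$\phi$ contains features of class $X$'' to mean that a freshly trained head on the frozen $\phi$ can separate $X$ from another class. This criterion depends only on what $\phi$ separates, not on its scale. Following Algorithm~\ref{alg:alg1} for DisCNN-1, the trained convolutional layers are grafted into a new network ending in a two-way softmax and frozen, and only the FC layers are retrained with cross-entropy.
\begin{itemize}
\item On \{cat, bird\} the error fluctuates around its initial value without converging. So $\phi$ carries nothing that distinguishes the two negatives.
\item On \{car, bird\} the error goes to zero. So $\phi$ carries discriminative information about car or bird, and the \{cat, bird\} result then assigns it to car.
\end{itemize}

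\paragraph{Repair.} Make this probe your main test: train readouts on the frozen $\phi$ for pairs of negatives and for positive versus negative. Keep the norm statistics and the pasting experiment as secondary, descriptive checks.
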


\textbf{Proof:} (in case of DisCNN-1)

First, \cref{alg:alg1} is proposed to prove the theorem.
According to \cref{alg:alg1}, two experiments are conducted on the datasets \{car, bird\} and \{cat, bird\}, respectively. When training the grafted DisCNN’-1 with \{car, bird\} as in step 3, the error converges to zero. This indicates that features of cars, or birds, or both are extracted to discriminate between them. And when using \{cat, bird\}, the error fluctuates around that of the initial parameters and has no sign of convergence. This indicates that the features of the cats and birds have not been extracted. Based on the two facts above, it can be concluded that DisCNN-1 extracts only car features and neglects those of cats and birds. 

The theorem is proved.

\begin{algorithm}
	\caption{Feature determination}
    \label{alg:alg1}
	\begin{algorithmic}
		\STATE \textbf{Step 1:} By using N2O loss, a DisCNN-1 as shown in Table 1 is trained through backpropagation, with \{Car\} as the positive class and \{cat, bird\} as the negative classes.
		\STATE \textbf{Step 2:} Initialize a new DisCNN-1 appended with a softmax layer with two nodes corresponding to the number of classes, i.e., 2, and denote it DisCNN’-1.
		\STATE \textbf{Step 3:} Graft the convolutional layer parameters of DisCNN-1 strictly to DisCNN’-1, denoted as grafted DisCNN’-1, and then train this grafted DisCNN’-1 by cross-entropy loss with a dataset consisting of two classes. During training, the parameters of the convolutional layers must be frozen and only the FC layer parameters updated.
        \STATE \textbf{Step 4:} If the training in step 3 converges, indicate that the features of at least one of the two classes are extracted by DisCNN-1; and if it does not, indicate that no features of the two classes are extracted by DisCNN-1. 
	\end{algorithmic}
\end{algorithm}

\begin{remark}
\label{r4}
According to \cref{Th1}, compared with cross-entropy, DisCNN achieves feature disentanglement, that is, the features of the positive class are disentangled with those of other contrast negative classes.

\end{remark}

\subsection{Model Evaluation}
\label{3.2}
Using the proposed N2O loss, with \{car\} as the positive class and \{bird, cat\} as the negative class, a DisCNN-8 is trained by backpropagation.

\emph{First}, the STL-10 test sub-dataset \{car, bird, cat\} is used to evaluate DisCNN-8. If the modulus of the output vectors is greater than zero, it is classified as a positive class (i.e., a car); otherwise, it is classified as a negative class. The model exhibits good generalization, as indicated by the confusion matrix in \cref{tab:t3}.

\begin{table}[tb]
  \caption{Confusion matrix
  }
  \label{tab:t3}
  \centering
  \tabcolsep=0.5cm
  \begin{tabular}{@{}lll@{}}
    \toprule
    &Positive &Negative\\
    \midrule
    True &TP = 787 &FN = 175\\
    False &FP = 13 &TN = 1425\\
  \bottomrule
  \end{tabular}
\end{table}

\begin{remark}
\label{r5}
When the distribution of the test dataset differs from that of the training dataset, some negative class encodings will deviate from Origin and go close to the positive class encodings. However, the module of the encoding vector can still be used to classify them correctly. The smaller the module, the closer the vector is to Origin, indicating a higher possibility of being a negative class. Therefore, by comparing a suitable threshold with the sample encoding module, the model can remain effective.    
\end{remark}

Thresholds can be chosen according to practical needs. If a lower false positive rate for the positive class is desired, a smaller threshold should be chosen. Conversely, if a lower false-negative rate is required for the negative class, a larger threshold should be selected. The evaluation metrics under different thresholds are shown in Table 4. According to \cref{tab:t4}, a threshold of 1 is preferable to 0. 

\begin{table}[tb]
  \caption{Evaluation Metrics of different thresholds
  }
  \label{tab:t4}
  \centering
  \tabcolsep=0.5cm
  \begin{tabular}{@{}lllll@{}}
    \toprule
    threshold&&Precision&Recall&F1\\
    \midrule
    0&Positive&0.985&0.816&0.892\\
    &Negative&0.889&0.992&0.937\\
    
    \midrule
    \textbf{1}&Positive&\textbf{0.968}&\textbf{0.924}&\textbf{0.945}\\
    &Negative&\textbf{0.960}&\textbf{0.983}&\textbf{0.972}\\
    
    \midrule
    2&Positive&0.941&0.964&0.953\\
    &Negative&0.983&0.971&0.977\\
  \bottomrule
  \end{tabular}
\end{table}

\emph{Secondly}, this model demonstrates excellent generalization to unseen classes. Unseen samples without any positive class features are encoded as zero vectors, whereas those with similar features are encoded in the same compact set as the positive class. Testing the model on \{chuck, deer, monkey\}, chuck is encoded the same way as car, and deer and monkey are encoded into zero vectors. Choosing a threshold of 1 produces fairly good results, as shown in \cref{tab:t5}.

\begin{table}[tb]
  \caption{Evaluation Metrics of different thresholds for unseen classes
  }
  \label{tab:t5}
  \centering
  \tabcolsep=0.5cm
  \begin{tabular}{@{}lllll@{}}
    \toprule
    threshold&&Precision&Recall&F1\\
    \midrule
    0&Positive&0.965&0.778&0.862\\
    &Negative&0.863&0.980&0.918\\
   
    \midrule
    \textbf{1}&Positive&\textbf{0.931}&\textbf{0.894}&\textbf{0.912}\\
    &Negative&\textbf{0.945}&\textbf{0.965}&\textbf{0.955}\\
   
    \midrule
    2&Positive&0.886&0.944&0.914\\
    &Negative&0.974&0.945&0.959\\
  \bottomrule
  \end{tabular}
\end{table}

\subsection{Object Detection with DisCNN}
To detect cars embedded in a complex background, the large image is first evenly partitioned into patches of appropriate size, and then the trained DisCNN-8 from \cref{3.2} is applied to determine whether each patch is a car. The results are shown in \cref{fig:f1}.

\begin{figure}[tb]
  \centering
  \includegraphics[height=6cm]{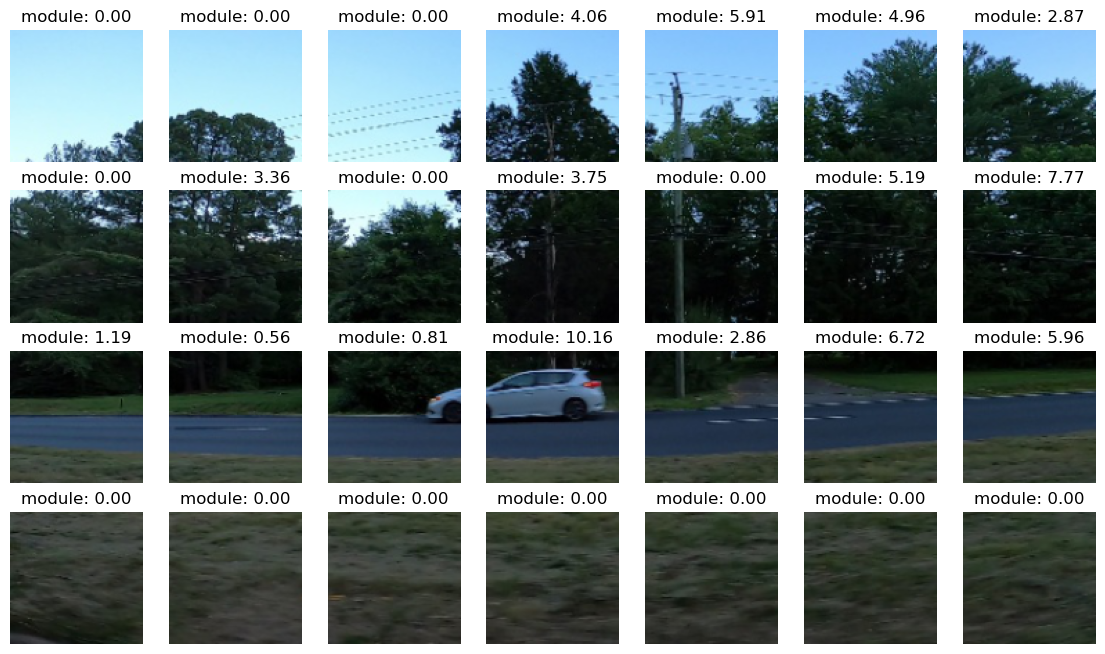}
  \caption{Patches and modules
  }
  \label{fig:f1}
\end{figure}

Since the ratio of positive to negative samples here is 1:28, quite different from the training data, a considerable portion of the background patches can activate the model, as shown in \cref{fig:f1}. So, an appropriate threshold shall be chosen according to \cref{r5}. By sorting the modules in descending order, as shown in \cref{fig:f2}, the threshold can be set to 8 to silence all background patches and locate the car patch, i.e., the third row and the fourth column.

\begin{figure}[tb]
  \centering
  \includegraphics[height=1.5cm]{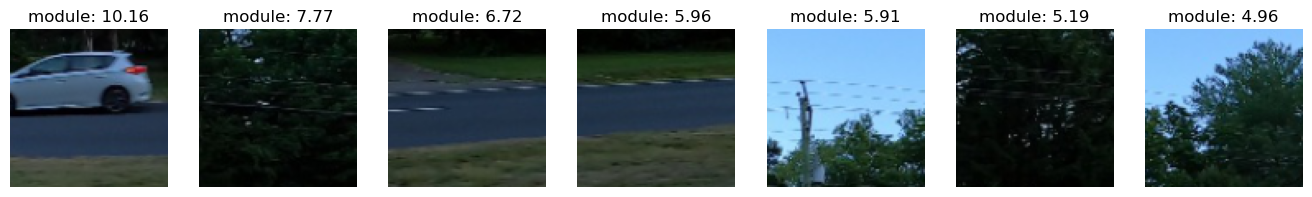}
  \caption{Patches sorted by modules
  }
  \label{fig:f2}
\end{figure}

\begin{remark}
\label{r6}
The background patches having no similar features of the positive class car also cannot activate the model, just like deer and monkey, which is quite useful for object detection.

\end{remark}

\section{Conclusions}

This paper presents DisCNN and its training loss function, which provides a distributed framework for CNN-based object recognition. The key point is that a trained DisCNN has been shown to extract only relevant positive class features. Therefore, during testing, the model will not respond to the negative classes used in training or to other unseen negative classes that share no common features with the positive class. If it is an object similar to the positive class, it will be encoded in the same compact set as the positive class, with a probability equivalent to the degree of similarity. Our model is consistent with the object-recognition architecture of the ventral pathway in the human brain, since both are distributed. As a fundamental visual representation framework, DisCNN has broad application prospects and could be a great help to existing top computer vision technologies, such as YOLO\cite{redmon2016you} for object detection, world model of spatial intelligence, and JEPA\cite{assran2023self}.

\section*{Acknowledgements}
Words cannot express my gratitude to Professor Hung-yi Lee of National Taiwan University for his outstanding online classes that gave me a comprehensive understanding of neural networks, to Professor Bin Li of South China University of Technology for his invaluable feedback that corrected a misconception I had about neural networks, without which this paper would never show up, and to Professor Yann LeCun of New York University for his invaluable public speakings that always led me in the right direction of research. I am also very grateful to Pytorch and ImageNet; these tools make everything possible.

%
%
\bibliographystyle{splncs04}
\bibliography{main}
\end{document}